\documentclass[conference]{IEEEtran}
\IEEEoverridecommandlockouts

\usepackage{cite}
\usepackage{amsmath,amssymb,amsfonts}
\usepackage{algorithmic}
\usepackage{graphicx}
\usepackage{textcomp}
\usepackage{xcolor}

\usepackage{amsthm}
\newtheorem{definition}{Definition}
\newtheorem{theorem}{Theorem}
\DeclareMathOperator*{\argmax}{arg\,max}

\newcommand{\distas}[1]{\mathbin{\overset{#1}{\kern\z@\sim}}}%
\usepackage{bm}
\usepackage{booktabs}
\usepackage{multirow}
\usepackage{multicol}
\usepackage{xcolor}
\usepackage{subfig}

\usepackage{tikz}
\newcommand*\circled[1]{\tikz[baseline=(char.base)]{
            \node[shape=circle,fill,inner sep=1pt] (char) {\footnotesize \textcolor{white}{#1}};}}

\def\BibTeX{{\rm B\kern-.05em{\sc i\kern-.025em b}\kern-.08em
    T\kern-.1667em\lower.7ex\hbox{E}\kern-.125emX}}
\begin{document}

\title{Power Interpretable Causal ODE Networks: A Unified Model for Explainable Anomaly Detection and Root Cause Analysis in Power Systems\\
}

\author{
\IEEEauthorblockN{Yue Sun, Likai Wang, Rick S. Blum, Parv Venkitasubramaniam}
\IEEEauthorblockA{
\textit{Electrical Engineering, Lehigh University}\\
Bethlehem, PA, USA\\
\{yus516, liw324, rb0f, pav309\}@lehigh.edu
}
}

\maketitle

\begin{abstract}
Anomaly detection and root cause analysis (RCA) are critical for ensuring the safety and resilience of cyber-physical systems such as power grids. However, existing machine learning models for time series anomaly detection often operate as black boxes, offering only binary outputs without any explanation, such as identifying anomaly type and origin. 
To address this challenge, we propose Power Interpretable Causality Ordinary Differential Equation (PICODE) Networks, a unified, causality-informed architecture that jointly performs anomaly detection along with the explanation why it is detected as an anomaly, including root cause localization, anomaly type classification, and anomaly shape characterization. 
Experimental results in power systems demonstrate that PICODE achieves competitive detection performance while offering improved interpretability and reduced reliance on labeled data or external causal graphs.
We provide theoretical results demonstrating the alignment between the shape of anomaly functions and the changes in the weights of the extracted causal graphs. 
\end{abstract}

\begin{IEEEkeywords}
Anomaly detection, root cause analysis, anomaly type classification, anomaly shape characterization.
\end{IEEEkeywords}

\section{Introduction}

Anomaly detection is a critical task in the operation and protection of modern power systems~\cite{wadi2021anomaly, cooper2023anomaly, asefi2023anomaly}. Anomalies arise from a wide variety of sources, including physical events (e.g., line faults~\cite{zhao2018anomaly, danilczyk2021smart}), cyber intrusions (e.g., FDI, also known as false data injection attacks~\cite{du2022review, madabhushi2023survey, mahi2022false}), and sensor-level issues (e.g., random noise~\cite{pan2019static}, drift~\cite{fenza2019drift}, or miscalibration~\cite{gummadi2024xai}).  These anomaly types differ widely in both their causes and consequences. For example, sensor noise is often transient and statistically benign, usually manageable through filtering and regular maintenance. Sensor drift or miscalibration, while not immediately harmful, introduces persistent bias into measurements and can degrade system performance over time. In contrast, FDI attacks are deliberate and adversarial: they aim to stealthily alter measurements in ways that mislead state estimators or trigger false control actions~\cite{chen2015detection}. Meanwhile, physical faults such as line outages can propagate rapidly and cause cascading failures if not quickly addressed~\cite{elmasry2022enhanced, sun2025unifying}.

Given this heterogeneity, it is insufficient to merely detect the presence of an anomaly merely. A binary anomaly flag provides little actionable insight unless it is accompanied by more information: What kind of anomaly is it? How urgent is it? What caused it? For instance, a transient fluctuation due to sensor noise may require no response, whereas a cyber-induced voltage instability demands immediate system-wide intervention. This distinction underscores the need for systems that not only detect anomaly but also perform root cause analysis (RCA) to identify origin and underlying mechanisms.

However, both anomaly detection and RCA are inherently challenging in power systems~\cite{chandola2009anomaly, mirzaei2009review}. Cyber attacks are often designed to mimic normal variations or hide within sensor noise~\cite{madabhushi2023survey}. Sensor drift may evolve too slowly to trigger alarms, yet can cause long-term control degradation~\cite{fenza2019drift}. Physical faults often exhibit spatial-temporal propagation, making it difficult to trace their origin~\cite{ju2021fault}. Moreover, the tight coupling of cyber and physical layers in modern grids introduces complex dynamics that invalidate many assumptions of traditional model-based approaches. These challenges underscore the need for learning-based systems that are robust, interpretable, and capable of performing joint anomaly detection and RCA.

\begin{figure}
\centering
\includegraphics[width=0.99\columnwidth]{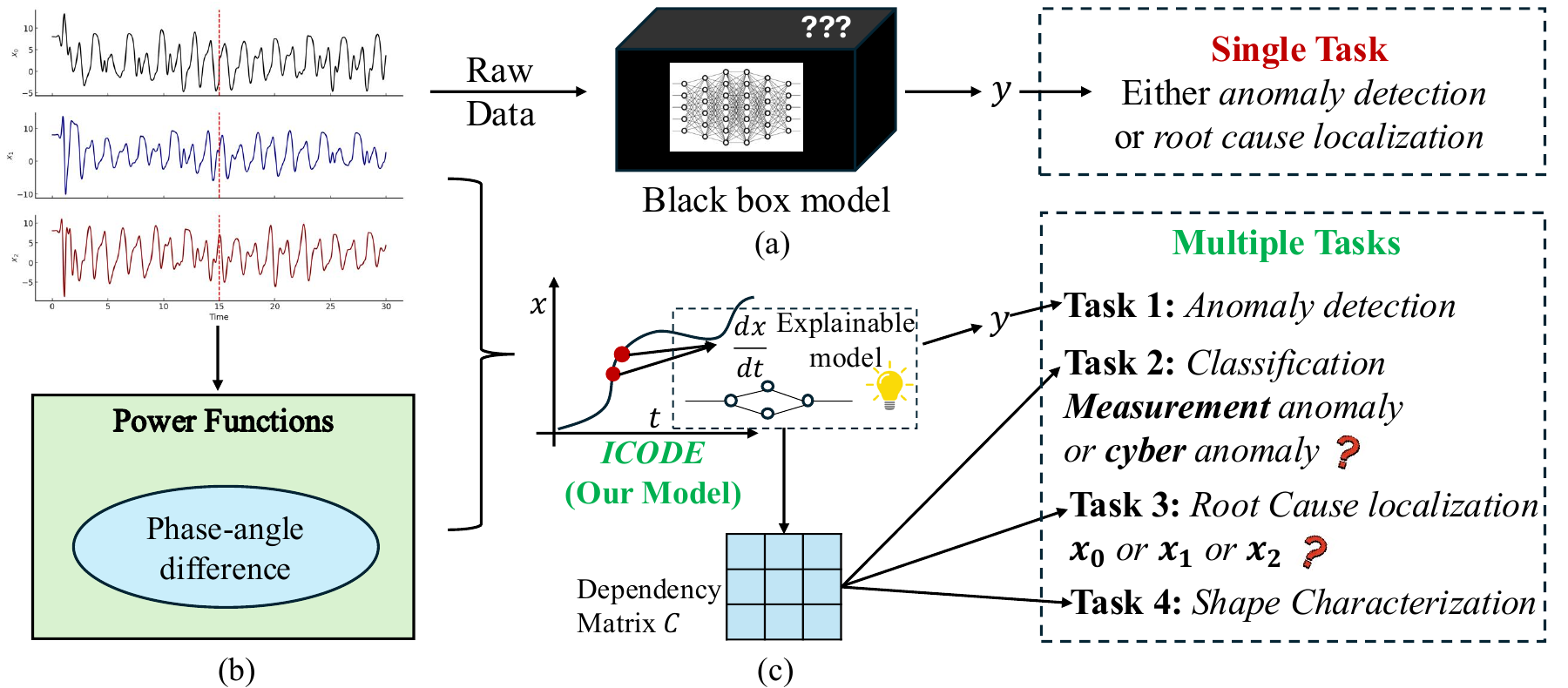} 
\caption{(a) Traditional models perform either anomaly detection or root cause localization.
(b) We encode time series using domain knowledge (e.g., phase-angle differences) for explainability.
(c) Our model unifies detection, localization, type classification, and shape characterization by leveraging a learned causal dependency graph.
}
\label{fig:introduction}
\vspace{-0.2in}
\end{figure}

Recent advances in machine learning (ML) have shown great promise in modeling complex spatiotemporal dependencies for anomaly detection~\cite{zhou2021vae, xu2021anomaly, cooper2023anomaly} and RCA~\cite{budhathoki2022causal, xin2023causalrca}. In anomaly detection, ML models typically learn normal operational patterns and flag deviations as potential anomaly~\cite{xu2021anomaly}, while RCA often involves learning causal structures to trace anomaly back to their sources~\cite{han2023root, budhathoki2022causal}. However, these two tasks are commonly treated as separate modules, resulting in increased model complexity, latency, and inconsistency between detection and attribution. Most existing approaches are also not designed to identify the type of anomaly, an essential aspect to assess urgency and implement appropriate responses. For instance, correcting a miscalibrated sensor differs significantly from responding to a coordinated cyberattack. Furthermore, these methods often require large labeled datasets and exhibit high computational cost, restricting their utility in real-time or resource-constrained settings. Besides these challenges, most ML models function as black boxes, offering little interpretability, and are purely data-driven, ignoring domain knowledge such as phase-angle dynamics~\cite{sun2025unifying}. These limitations underscore the urgent need for unified, explainable, and physics-informed AI frameworks capable of jointly performing anomaly detection, type classification, and RCA within a single model.

\begin{figure}
\centering
\includegraphics[width=0.99\columnwidth]{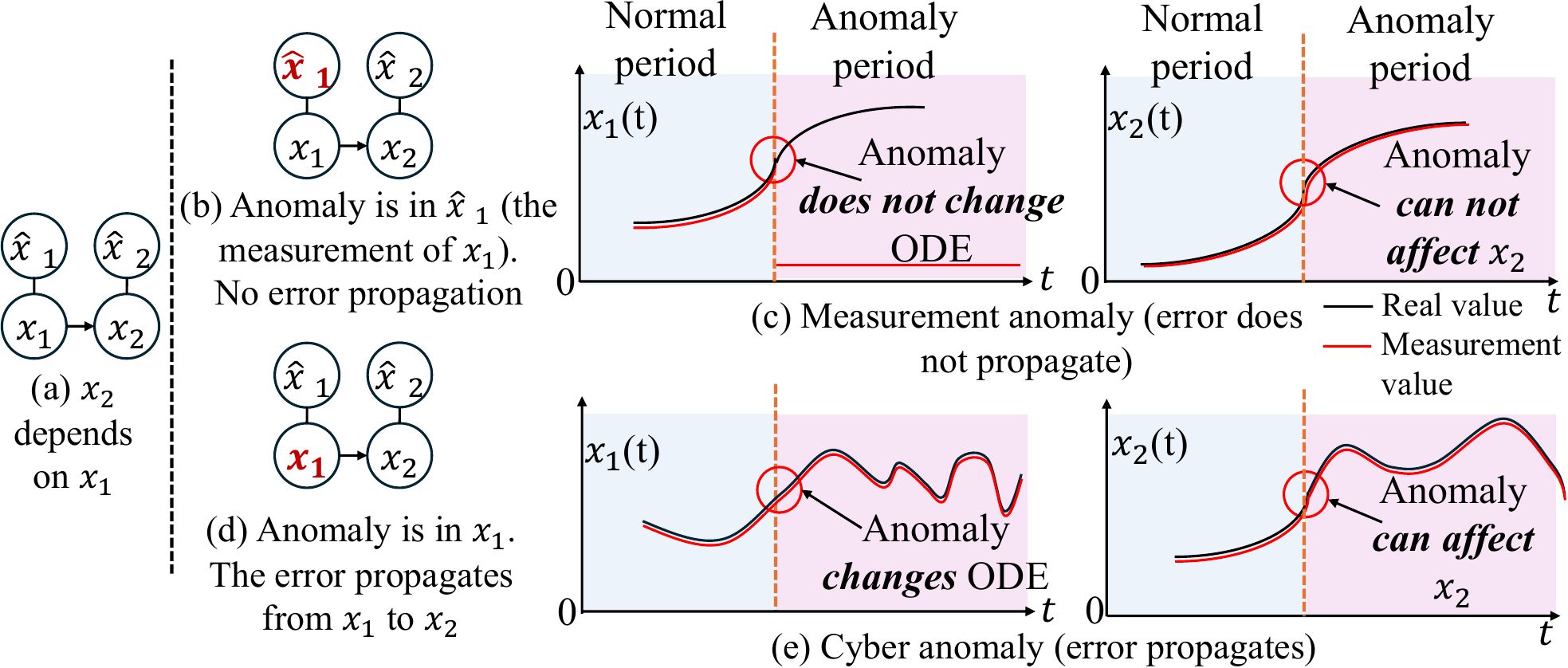} 
\caption{Comparison of Measurement and Cyber Anomalies in a Two-variable System
(a) Normal state.
(b) Measurement anomaly affects the measurement of a variable without altering underlying ODEs.
(c) Measurement anomaly does not propagate to dependent variables.
(d) Cyber anomaly alters the underlying ODE and lead to an anomaly. 
(e) Cyber anomaly propagates anomaly to dependent variables.
}
\label{fig:problem-def}
\end{figure}

To address the challenges of anomaly analysis in power systems, we begin by dividing anomaly into two types:
\begin{itemize}
    \item {Measurement anomaly}, typically caused by sensor noise, drift, or miscalibration, are generally localized to individual variables and do not alter other variables.
    \item {Cyber anomaly}, on the other hand, results from malicious interventions or hidden faults and can propagate through the system, potentially impacting multiple variables.
\end{itemize}

To investigate the differences between these types, we propose Power Interpretable Causality Ordinary Differential Equation (PICODE) which characterizes the {temporal shape} of anomaly to enhance interpretability and support operational decision-making. Specifically, we distinguish between \textit{monotone} and \textit{unimodal} patterns, which indicate whether an anomaly is consistently increasing, decreasing, or exhibiting a single peak before diminishing. This insight helps prioritize responses by differentiating between urgent, escalating events (e.g., propagating cyberattacks) and benign or self-correcting disturbances (e.g., transient noise).

Our central hypothesis is that anomaly in a dynamical system governed by ODEs manifest as perturbations to the system's governing dynamics. These perturbations induce both \textit{predictive deviations} and \textit{structural changes} in the system’s learned causal graph, as illustrated in Figure~\ref{fig:introduction}.

PICODE builds upon Neural ODEs~\cite{chen2018neural} to learn the continuous-time dynamics of power systems from observational data. By modeling normal system behavior during anomaly-free periods, PICODE can detect deviations that not only signify the presence of anomaly but also reveal root cause locations, types, and characterize the shape of the anomaly function.
The key contributions of this work are summarized below:
\begin{itemize}
    \item We define measurement and cyber anomaly based on whether the root cause remains localized or alters downstream dependencies in the causal graph.
    \item We propose {PICODE}, an interpretable machine learning framework based on Neural ODE, capable of performing anomaly detection, root cause localization, type classification, and shape characterization.
    \item We provide a theoretical analysis showing how parameter shifts in the learned ODE can be used to characterize the shape of the anomaly function.
    \item We conduct extensive experiments on three simulated ODE-based systems, demonstrating that PICODE outperforms state-of-the-art methods in anomaly detection, classification, and root cause analysis.
\end{itemize}

\section{Related Work}

\textbf{Anomaly Detection in Power Systems.}
Anomaly detection has been extensively studied in power systems due to its critical role in ensuring grid stability and operational safety~\cite{wadi2021anomaly, cooper2023anomaly, asefi2023anomaly}. Deep learning models~\cite{pang2021deep} have shown strong performance in time series anomaly detection, including power system data. Variational Autoencoder (VAE)-based models~\cite{zhou2021vae} have been used to learn compact latent representations that help distinguish normal from abnormal patterns. The Anomaly Transformer~\cite{xu2021anomaly} improves detection accuracy by jointly capturing pointwise temporal features and pairwise associations. Diffusion-based imputation models~\cite{xiao2023imputation} have also been proposed for detecting anomaly by learning data reconstruction patterns. However, most of these models are developed in general time series settings and often lack domain adaptation to power systems. Furthermore, they primarily provide binary anomaly decisions and fail to explain the nature, type, or cause of the detected anomaly. Their high model complexity and reliance on large labeled datasets also pose challenges for real-world deployment in power grids, where labeled anomaly are rare and interpretability is essential.

\begin{figure*}
\centering
\includegraphics[width=1.98\columnwidth]{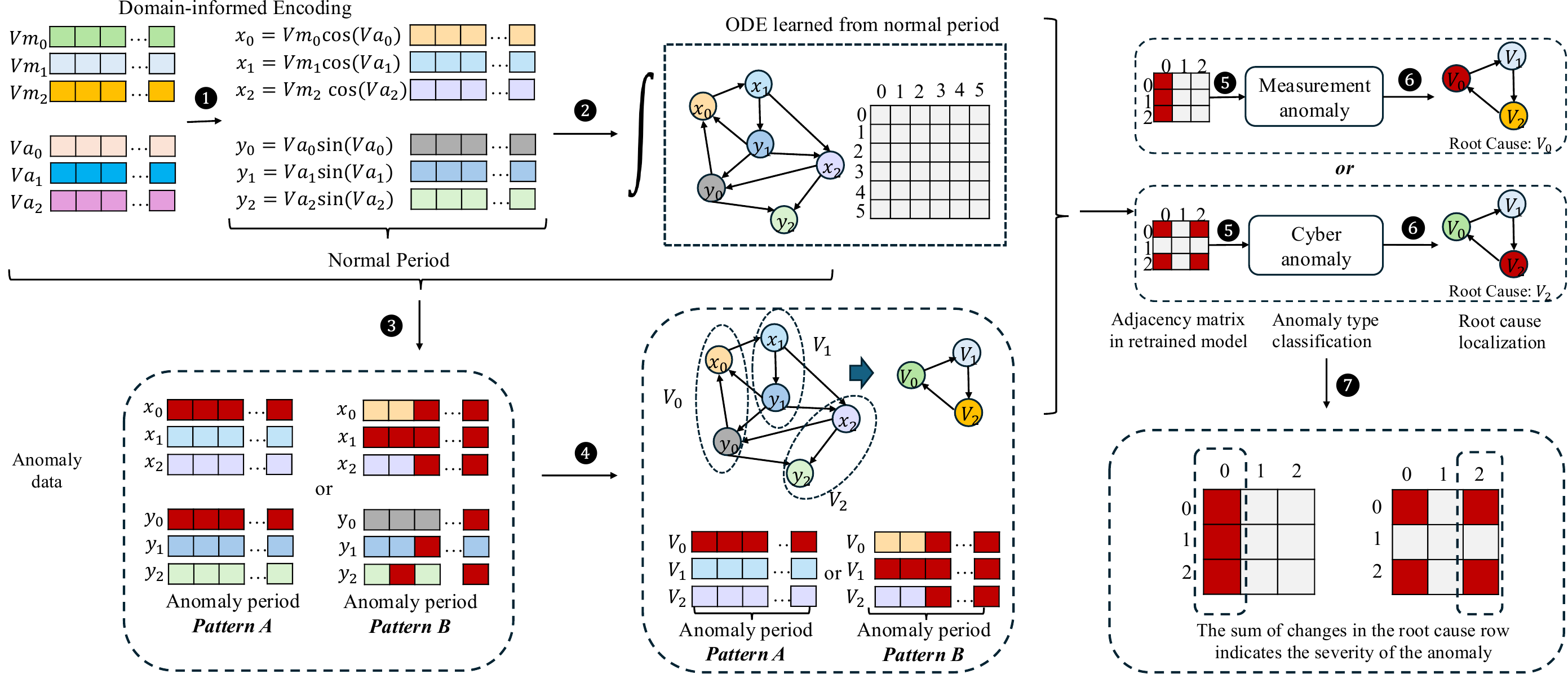} 
\caption{Illustration of PICODE framework on a 3-variable dynamical system. 
$\protect\circled{1}$ Voltage magnitude and angle measurements are encoded into domain-informed Cartesian components to form the input state $S(t)$. 
$\protect\circled{2}$ PICODE is trained on data from the normal period to learn the baseline causality matrix $C$ via an ODE-based model. 
$\protect\circled{3}$ When an anomaly is detected in new data, the model is retrained using this anomalous segment to obtain an updated causality matrix $C'$. 
$\protect\circled{4}$ Differences between $C$ and $C'$ are analyzed to identify patterns indicative of measurement or cyber anomaly. 
$\protect\circled{5}$ Based on the structural change in the causal graph, the anomaly type is classified using Eq.~(\ref{eq:anomaly-type}). 
$\protect\circled{6}$ Root cause localization is then performed using Eq.~(\ref{eq:get-root-cause}). 
$\protect\circled{7}$ Finally, the anomaly’s progression is assessed by evaluating changes in the causal influence of the root variable over time.}
\label{fig:architecture}
\end{figure*}

\textbf{Root Cause Analysis.}
RCA has emerged as a complementary task to anomaly detection, aiming to trace the origin of faults or disturbances. Machine learning-based RCA approaches~\cite{lei2020applications, richens2020improving} have gained attention across domains, including energy systems. For instance, variational inference models such as CausalRCA~\cite{han2023root} incorporate prior causal knowledge via Pearl’s Structural Causal Models to locate root causes once anomaly are detected. However, these methods typically assume access to accurate and complete causal graphs, which is difficult in complex cyber-physical systems like power grids. Moreover, most approaches treat detection and RCA as disjoint tasks, requiring two separate models, leading to higher computational cost and potential inconsistencies. Recent advances have proposed causal graph discovery methods for RCA~\cite{budhathoki2022causal, wang2023interdependent}, but many of these assume that anomaly manifest strictly as statistical outliers. This assumption does not hold for slowly drifting or adversarial anomaly, which may subtly alter system dynamics without being immediate outliers, thereby degrading the robustness of causal inference. These limitations motivate the need for unified and interpretable RCA models that do not rely on causal priors and operate directly on noisy, real-world data.

\textbf{Model Explainability using Neural ODEs.} 
Model-intrinsic-explainable models are designed with inherent transparency~\cite{sun2024generalization}, allowing direct interpretation of their decisions and processes from their internal structures~\cite{Lakkaraju2016,Lou2012, sun2024incorporating}. Self-explainable Neural Networks (SENN) employ explainable basis concepts for explicit and faithful predictions and classification~\cite{alvarez2018towards}. 
Building on SENN, the GVAR model learns causal relationships within time series data using an interpretable framework~\cite{marcinkevivcs2021interpretable}. 
Additionally, time series systems~\cite{xu2024robust, xu2025wasserstein} often follow differential equations, and Neural ODEs have been proposed to approximate these systems~\cite {chen2018neural, jia2019neural, asikis2022neural, sun2023reaction}. 
However, using model-intrinsic-explainable models enhanced by Neural ODEs to learn patterns in dynamic systems for anomaly detection and RCA remains largely unexplored.


\section{Notations and Problem definition}

The framework we propose applies to cyber-physical systems broadly. In this paper, we focus on power systems as a particular application to develop and test the computational methods. 
Power systems consist of interconnected electrical components, including generators, transmission lines, resistors, capacitors, and loads, which are used to simulate real-world operational behavior. To ensure reliable grid monitoring, sensors are deployed throughout the system to collect time series data on critical state variables such as voltage, frequency, and current. 
These variables are used for the grid control, including actions to load changes, power system stabilization, and automatic generation control, making this system a complex controller system. 
These sensor readings form the basis for real-time decision-making, anomaly detection, and system diagnostics.

In this work, we focus on two key variables commonly monitored in power systems: the \textit{voltage magnitude} \( V_m(t) = \{ V_{m1}(t), V_{m2}(t), \dots, V_{mp}(t) \} \) and the \textit{voltage angle} \( V_a(t) = \{ V_{a1}(t), V_{a2}(t), \dots, V_{ap}(t) \} \), measured at \( p \) locations over time \( t \). These variables are causally related through the physical dynamics of the power system, which can be modeled by a directed graph \( \mathcal{G} = \{ V(t), \mathcal{E} \} \), where each node represents a time-varying variable \( V_i(t) \), and each edge \( (i, j) \in \mathcal{E} \) indicates a directed influence from \( V_i \) to \( V_j \).

Many differential equations have been proven to exist connecting these variables in power systems, for example, the linearized swing equations~\cite{salam2003arnold}. These equations are usually ODEs; thus, we employ Neural ODE networks to learn the patterns. We assume the system follows ordinary differential equation (ODE) dynamics:
\begin{equation}
\label{eq:system}
\begin{aligned}
    \frac{dV_i(t)}{dt} & = f_i \left( V(t), \mathcal{G} \right), \quad \forall 1 \leq i \leq p, \\
    V_i(t) & = V_i(0) + \int_0^t f_i \left( V(s), \mathcal{G} \right) ds,
\end{aligned}
\end{equation}

\begin{figure}[!ht]
    \centering
    \includegraphics[width=0.85\columnwidth]{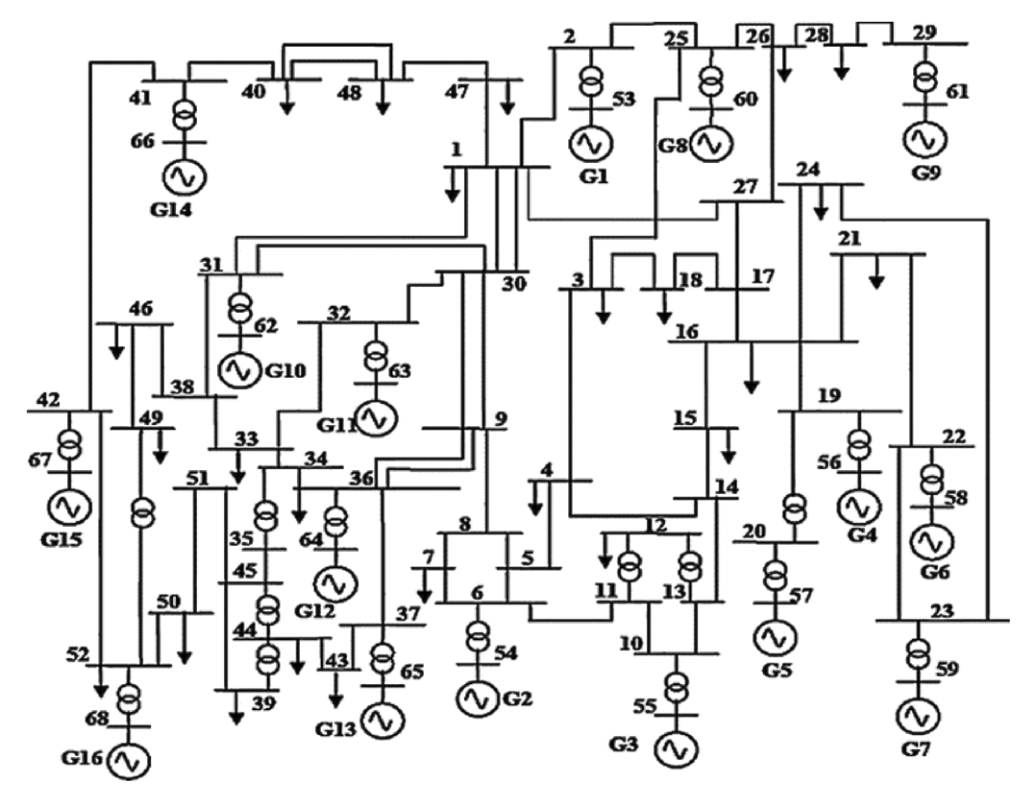}
    \caption{Single line diagram of the IEEE 68-bus test system.}
    \label{fig:ieee-diagram}
\end{figure}
\noindent{where $f_i$ is the nonlinear function that governs the evolution of variable $ V_i$ based on its causal parents in the graph.
In practice, the observed measurements \( \{ \hat{V}_1(t), \hat{V}_2(t), \dots, \hat{V}_p(t) \} \), from sensors are corrupted by local sensor faults, drift, or noise, which we define as a measurement anomaly:}

\begin{definition}
\textbf{Measurement Anomaly}: 
We define a system to exhibit a measurement anomaly in variable $i$ at time $t$, if the variable measurement at time $t$ is given by:
\begin{equation}
\label{eq:physical}
\begin{aligned}
    \hat{V}_i'(t) & = A(\hat{V}_i(t)),
\end{aligned}  
\end{equation}
where $A(\cdot)$ is an anomalous function such that $A(x) \neq x$.
\end{definition}

Unlike a measurement anomaly, a cyber anomaly modifies the true system variable \( V_i(t) \) itself, which subsequently propagates to neighboring variables. This may be due to malicious attacks, such as FDI, and is expressed as:
\begin{definition}
\textbf{Cyber Anomaly}: 
We define a system to exhibit a cyber anomaly in variable $i$ at time $t$, with anomalous function denoted as $A(\cdot)$,
if variable $v_i$ is changed to 
\begin{equation}
    V_i'(t) = A(V_i(t)), A(x) \neq x.
\end{equation} 
\end{definition}

Unlike measurement anomaly, cyber anomaly propagate through the causal structure \( \mathcal{G} \), potentially influencing downstream variables and destabilizing the system.
An example of a cyber and measurement anomaly is shown in Figure~\ref{fig:problem-def}.

In addition to identifying the type and origin of anomaly, we are also interested in their \textit{temporal dynamics}, specifically, when the system is under FDI and $A(x) = x + z (t)$, whether the magnitude of the anomaly \( |z(t)| \) is increasing, decreasing, or stable over time. We refer to this as the \textbf{shape} of anomaly.

\begin{figure}[!ht]
    \centering
    \subfloat[]{\includegraphics[width=0.49\linewidth]{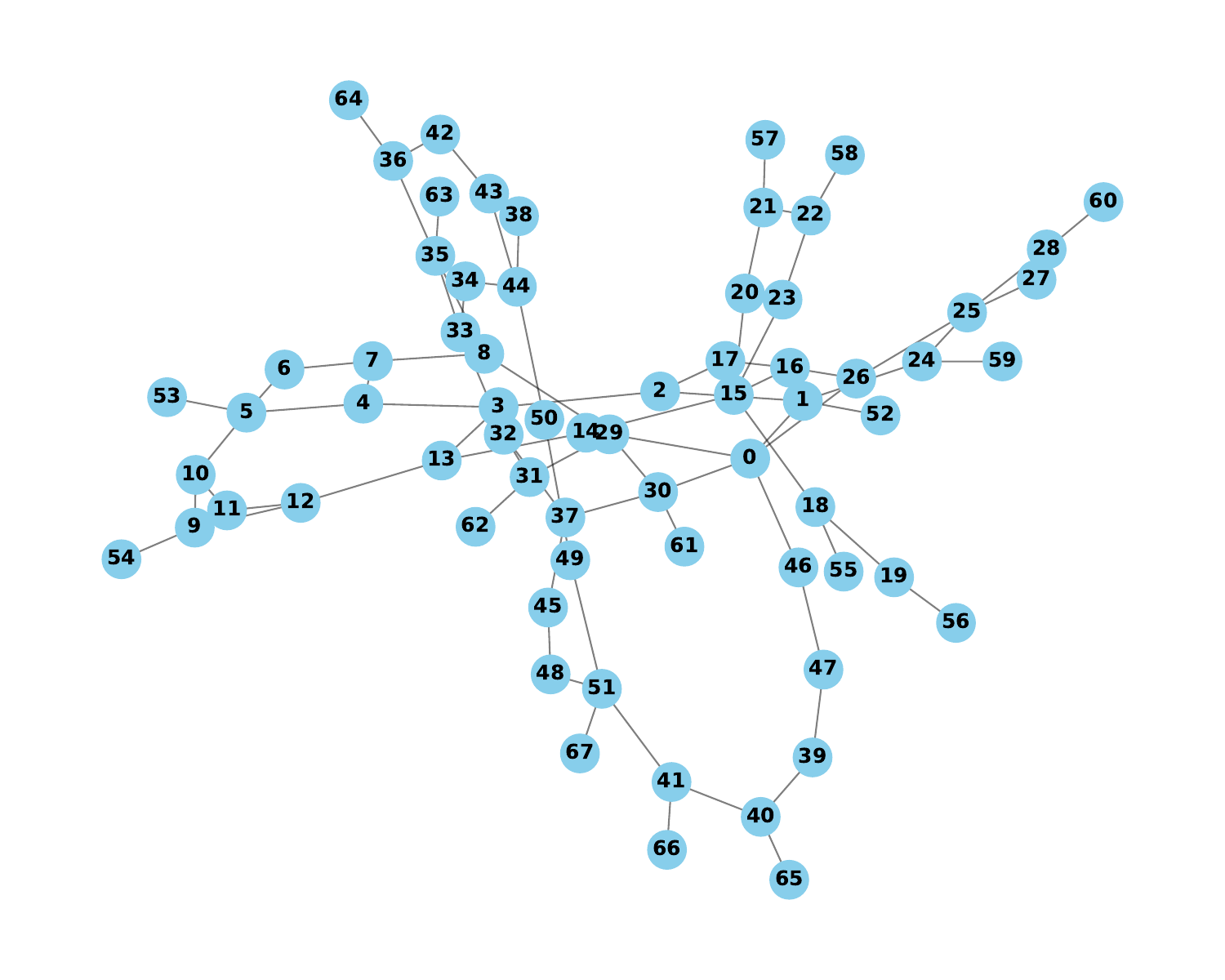}\label{fig:ieee-graph}}
    \hfill
    \subfloat[]{\includegraphics[width=0.49\linewidth]{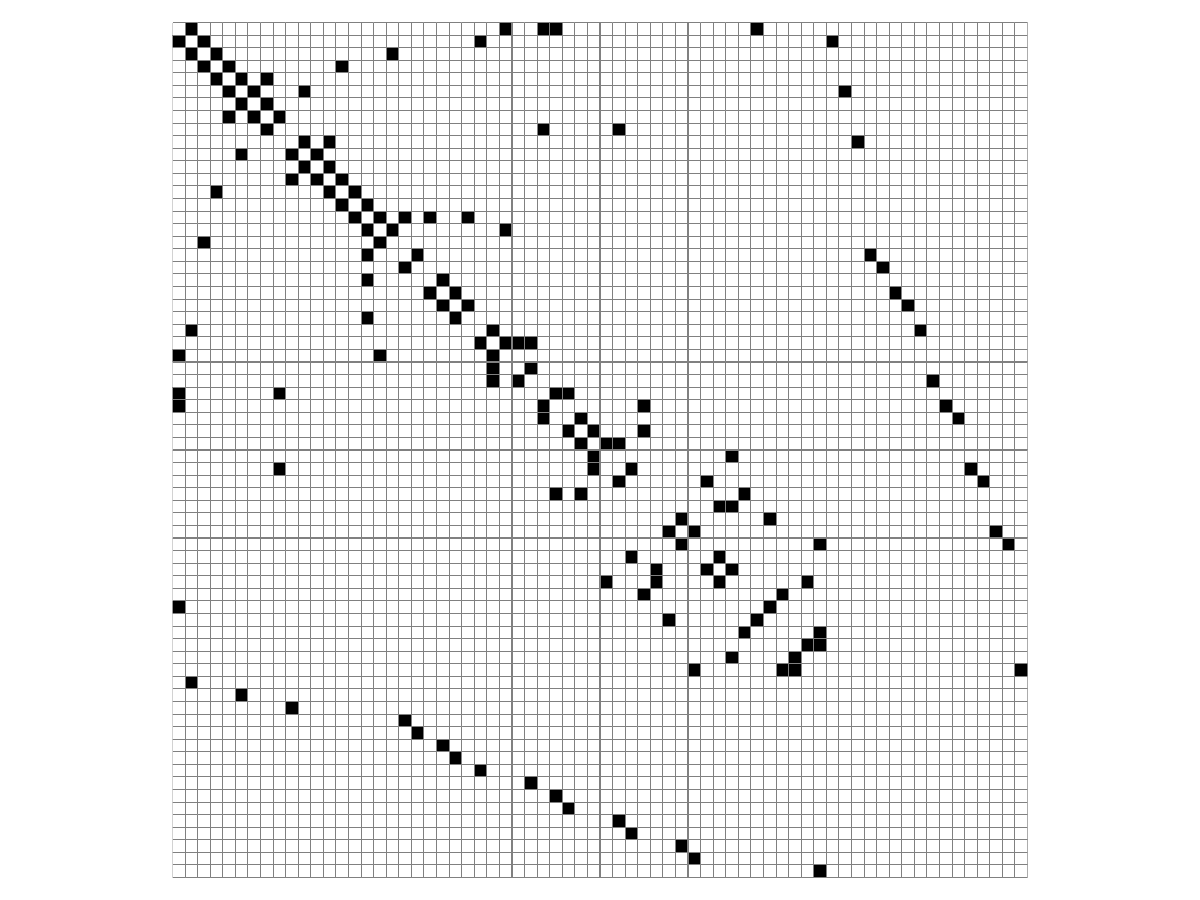}\label{fig:ieee-adj}}
    \caption{(a) Graph representation $\mathcal{G}$ of causality relationship in the IEEE 68-bus system. 
    (b) Adjacency matrix of the physical connection graph $\mathcal{G}$.}
    \label{fig:ieee-graph-adj}
\end{figure}

Given a multivariate time series of length \( t \), our goal is to develop a \textbf{model-intrinsic, explainable approach} that can: (1) Detect whether an anomaly exists; (2) Identify its type (measurement or cyber); (3) Localize the root-cause variable; (4) Characterize the anomaly’s shape over time.

To accomplish this, we propose a novel framework that learns and tracks \textit{causal relationships embedded in ODE dynamics} from time series data. Our method, described in the next section, leverages causal structure to achieve real-time, interpretable diagnostics for complex cyber-physical systems.

\begin{table*}[!h]
\small
\centering
\caption{Performance of PICODE compared to baseline models in anomaly detection.}
\begin{tabular}{llccccccccccccc}
    \midrule
    & & & \multicolumn{3}{c}{\textbf{Measurement Anomaly}} & \multicolumn{3}{c}{\textbf{Step Attack}} & \multicolumn{3}{c}{\textbf{Poisoning Attack}} \\
    \cmidrule(r){3-12}
    & & & Precision & Recall & F1 & Precision & Recall & F1 & Precision & Recall & F1  \\
    \midrule
    \multirow{3}{*}{} & Deep SVDD & & 0.656 & 0.667 & 0.661 & 0.898 & 0.912 & 0.905 & 0.102 & 0.103 & 0.103  \\
    & Anomaly Transformer & & 0.953 & 0.945 & 0.949 & 0.924 & 0.928 & 0.927 & 0.715 & 0.702 & 0.709 \\
    & PICODE & & 0.944 & 0.954 & 0.949 & 0.936 & 0.935 & 0.936 & 0.725 & 0.717 & 0.721 \\
    \midrule
\end{tabular}
\label{table:main-result}
\vspace{-0.1in}
\end{table*}

\section{Methodology}

Inspired by the concept of phase angle difference in power systems, we begin by encoding the voltage magnitude and angle into two channels:
\begin{equation}
    x_i = Vm_i \cdot \cos(Va_i), \quad y_i = Vm_i \cdot \sin(Va_i),
\end{equation}
where $Vm_i$ and $Va_i$ denote the voltage magnitude and angle at bus $i$. The full system state is then represented as $S(t) = [x_1(t), \dots, x_p(t), y_1(t), \dots, y_p(t)]^\top \in \mathbb{R}^{2p}$.

We propose the Interpretable Causality Ordinary Differential Equation (ICODE) network, a model-intrinsic, explainable framework for time series modeling and anomaly analysis in power systems. PICODE learns the underlying ODE dynamics of the system while also extracting the causal dependencies among variables. The model is defined as:
\begin{equation}
\label{eq:ICODE}
\begin{aligned} 
\frac{dS(t)}{dt} & = \Phi_{\theta}(S(t))S(t) + b, \\
\hat{S}(t+1) & = S(t) + \int_{t}^{t+1} \left( \Phi_{\theta}(S(\tau))S(\tau) + b\right) d\tau,
\end{aligned}
\end{equation}
where $b \in \mathbb{R}^{2p}$ is a bias term, and $\Phi_{\theta}(S(t)) \in \mathbb{R}^{2p \times 2p}$ is a neural network parameterized by $\theta$ that encodes causal dependencies among all transformed variables.

To interpret causality at the physical variable level, we aggregate relevant dependencies between $x_i$ and $y_i$. The final causality graph adjacency score $C_{i,j}$, representing the influence of bus $j$ on bus $i$, is computed as:
\begin{equation}
\begin{aligned}
C_{i,j} = \text{median}_{0 \leq t < T} \{|\Phi_{x_i \leftarrow x_j}| + |\Phi_{y_i \leftarrow x_j}| \\
+ |\Phi_{y_i \leftarrow y_j}| + |\Phi_{x_i \leftarrow y_j}| \},    
\end{aligned}
\end{equation}
where $\Phi_{a \leftarrow b}$ denotes the entry of $\Phi_\theta$ representing the influence from $b$ to $a$, and \textit{aggregate} refers to the transformation from $\Phi_{\theta}(S(t)) \in \mathbb{R}^{2p \times 2p}$ to $C \in \mathbb{R}^{p \times p}$ as described above. This aggregation enables reasoning over the original voltage variables despite encoding them into sine and cosine components.

For prediction, we employ Neural ODEs to estimate the integral and update the system state:
\begin{equation}
\hat{S}(t+1) = \text{NeuralODE}(S(t), (\Phi_{\theta}(S(t))S(t) + b), [t, t+1]).
\end{equation}

To ensure sparsity, we optimize the model with:
\begin{equation}
\begin{aligned}
Loss = \frac{1}{T-1} \sum_{t=0}^{T-1} \| \hat{S}(t+1) - S(t+1) \|^2 \\
+ \frac{\lambda}{T-1} \sum_{t=0}^{T-1} R(\Phi_{\theta}(S(t+1))).    
\end{aligned}
\label{eq:loss}
\end{equation}

\subsection{Anomaly Detection}
Anomalies are detected based on prediction deviation:
\begin{equation}
\label{eq:anomaly-score}
\text{AnomalyScore} = \sum_{t=0}^{T-1} \|\hat{S}(t) - S(t)\|_2.
\end{equation}

The AnomalyScore represents how the current pattern differs from the pattern during normal periods; thus, a higher score means a larger divergence between the current time period and the normal time period. Therefore, we employ a threshold to detect anomaly~\cite{xu2021anomaly}. 

\subsection{Root Cause Localization}
After we detect the anomaly and extract the anomaly time window, root cause localization is performed to localize the root cause of the anomaly. 
Intuitively, we localize the root cause by selecting the variable with the largest causality change. 
For both cyber and measurement anomaly, the root cause could be located by: 
\begin{equation}
\label{eq:get-root-cause}
\argmax_j \sum_{i=1}^p |C_{ij} - C'_{ij}|.
\end{equation}

While many models can perform the above two steps, identifying the root cause alone is often insufficient in cyber-physical systems like power grids. It is also crucial to determine whether the anomaly is local or likely to propagate. We next address this overlooked aspect.

\vspace{-0.1in}
\begin{table*}[!h]
\scriptsize
\centering
\caption{Performance of PICODE compared to baseline models in root cause localization.}
\begin{tabular}{llcccccccccc}
    \midrule
    & & & \multicolumn{3}{c}{\textbf{Measurement Anomaly}} & \multicolumn{3}{c}{\textbf{Poisoning Attack}} & \multicolumn{3}{c}{\textbf{Step Attack}} \\
    \cmidrule(r){4-12}
    & & & Top 1 & Top 3 & Top 5 & Top 1 & Top 3 & Top 5 & Top 1 & Top 3 & Top 5  \\
    \midrule
    \multirow{2}{*}{$V_m, V_a$} & CausalRCA &  & 0.015 & 0.044 & 0.073 & 0.015 & 0.088 & 0.088 & 0.044 & 0.088 & 0.088  \\    
    & RootClam  & & 0.015 & 0.044 & 0.073& 0.015 & 0.044 & 0.074 & 0.015 & 0.044 & 0.074 \\
    
    \midrule
    \multirow{2}{*}{$V_m \cos V_a, V_m \sin V_a$} & CausalRCA & &  0.015 & 0.044 & 0.073 & 0 & 0.015 & 0.029 & 0 & 0.044 & 0.088 \\
    & RootClam  & &  0.015 & 0.044 & 0.073 & 0.015 & 0.044 & 0.074 & 0.015 & 0.044 & 0.074 \\
    \midrule
    & PICODE & & 0.794 & 0.838 & 0.853 & 0.029 & 0.324 & 0.573 & 0.029 & 0.426 & 0.515\\
    \midrule
\end{tabular}
\label{table:rca-result}
\end{table*}

\subsection{Anomaly Type Classification}
We hypothesize that the type and structure of an anomaly can be inferred from the pattern of changes in causal relationships between normal and anomalous periods. In particular, a localized change—where the variation is concentrated in a single variable—typically indicates a \emph{measurement anomaly}, while distributed changes suggest a \emph{cyber anomaly}.

Let \( C \in \mathbb{R}^{p \times p} \) denote the causal matrix during the normal period and \( C' \) the matrix during the anomalous period. To measure the structural shift, we define row-wise change as:
\[
d_i = \frac{1}{p} \sum_{j=1}^p \left| C_{ij} - C'_{ij} \right|.
\]

Let \( i^* = \arg\max_i d_i \) denote the index of the row with the largest change. Then the top two changes are defined as:
\begin{align}
M_1(C, C') &= d_{i^*}, \\
M_2(C, C') &= \max_{i \neq i^*} d_i.
\end{align}

An anomaly is classified as a measurement anomaly if the dominance gap exceeds a threshold, representing that the changes are mostly in one column:
\begin{equation}
    \label{eq:anomaly-type}
    M_1(C, C') - M_2(C, C') \geq \gamma.
\end{equation}

Otherwise, it is classified as a cyber anomaly.

\subsection{Anomaly Shape Characterization}
To assess whether an anomaly is \emph{intensifying} or \emph{dissipating} over time, we partition the anomalous period into three consecutive time windows: \([0, t],\ [t, 2t],\ [2t, 3t]\). Within each window, we evaluate the deviation of the causal structure from normal state by tracking changes in the estimated causal graph.

Specifically, for a root cause variable \( i \), we define the cumulative causal shift within a window \([t_1, t_2]\) as:
\[
\Delta(t_1, t_2) = \sum_{j=1}^p \left| C_{ij} - C'_{ij} \right|,
\]
where \( C \) denotes the causal matrix under normal conditions and \( C' \) is the matrix learned from anomalous data within the interval \([t_1, t_2]\). The quantity \( \Delta(t_1, t_2) \) quantifies the total structural change affecting variable \( i \) during that time window.

By comparing \( \Delta(t_1, t_2) \) across sequential windows, we can identify the temporal evolution of the anomaly—whether its influence on the causal graph is growing, shrinking, or fluctuating. This enables a nuanced characterization of the anomaly’s \emph{shape} over time.

An overview of the PICODE architecture is provided in Figure~\ref{fig:architecture}. 
The theoretical analysis supporting the localization and type classification has already been discussed in the previous work. While the previous method was a general anomaly detector~\cite{sun2025unifying}, which was relatively limited in application for power systems. Our method effectively demonstrated its effectiveness in the power system exploration. In the following section, we delve into how PICODE captures and interprets anomaly shapes through temporal causal dynamics.

\section{Theoretical Analysis}

Analyzing black-box models can be difficult due to their lack of interpretability. To simplify the PICODE model, we assume that the Neural ODE effectively learns the dynamics inside the integral, which can be approximated by \( \Phi_\theta(S(t))S(t) \). We treat the causal graph adjacency score \( C \) as a \( p \times p \) matrix that solves the optimization problem defined by the loss in Eq.~(\ref{eq:loss}). This allows us to recast the learning process as a multi-task ridge regression problem, by assuming learning the $\Phi_{\theta}$ as an adjacency matrix $C$. 
For simplicity, we express the objective in the standard form of ridge regression as:
\begin{equation}
J(\bm{\beta}) = \|\mathbf{Y} - \mathbf{X}\bm{\beta}\|^2 + \lambda\|\bm{\beta}\|^2,
\end{equation}
where: $\mathbf{Y}$ is the matrix of target values, representing the adjacency of the causal graph, $\mathbf{X}$ is the matrix of features, $\bm{\beta}$ is the coefficient vector we're estimating, and $\lambda$ is the regularization parameter.
This is a general multi-variable ridge regression problem, and the analytical solution to this optimization problem is:

\begin{equation}
\label{eq:h_beta_ridge}
\hat{\bm{\beta}}_{ridge} = (\mathbf{X}^T\mathbf{X} + \lambda \mathbf{I})^{-1}\mathbf{X}^T\mathbf{Y},
\end{equation}
where $\hat{\bm{\beta}}_{ridge}$ is the solution to the ridge regression problem. Building on this result, we propose Theorem 1 to characterize the behavior of the solution under measurement perturbations.

\begin{theorem}[Monotonicity in Measurement Anomaly]
\label{theorem:measure}
Let $f(z) = \|\tilde{\beta}_j(z) - \beta_j\|^2$ represent the squared error in the $j$-th coefficient when the target vector is perturbed as $\tilde{\mathbf{Y}} = \mathbf{Y} + z\mathbf{v}$. If the original estimator produces a perfect estimate for $\beta_j$ (i.e., $\hat{\beta}_j = \beta_j$), then:
\begin{itemize}
    \item If $d_j \neq 0$, the function $f(z)$ is strictly monotonically increasing for $z > 0$ if $d_j > 0$, or for $z < 0$ if $d_j < 0$
    \item The function achieves its minimum value of 0 at $z = 0$
\end{itemize}
where $d_j$ is the $j$-th component of $\mathbf{d} = (\mathbf{X}^T\mathbf{X} + \lambda \mathbf{I})^{-1}\mathbf{X}^T\mathbf{v}$.
\end{theorem}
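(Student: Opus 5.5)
The plan is to exploit the linearity of the ridge estimator in the target. Write $\mathbf{H} = (\mathbf{X}^T\mathbf{X} + \lambda \mathbf{I})^{-1}\mathbf{X}^T$. For $\lambda>0$ the matrix $\mathbf{X}^T\mathbf{X}+\lambda\mathbf{I}$ is positive definite, hence invertible, so $\mathbf{H}$ is a well-defined fixed linear map that does not depend on $z$.

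Applying Eq.~(\ref{eq:h_beta_ridge}) to the perturbed target $\tilde{\mathbf{Y}} = \mathbf{Y} + z\mathbf{v}$ gives
\[
\tilde{\bm{\beta}}(z) = \mathbf{H}(\mathbf{Y} + z\mathbf{v}) = \hat{\bm{\beta}}_{ridge} + z\,\mathbf{d},
\]
with $\mathbf{d} = \mathbf{H}\mathbf{v}$ exactly as defined in the statement. Reading off the $j$-th coordinate, $\tilde{\beta}_j(z) = \hat{\beta}_j + z d_j$.

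Next we invoke the hypothesis $\hat{\beta}_j = \beta_j$. This gives $\tilde{\beta}_j(z) - \beta_j = z d_j$, and therefore
\[
f(z) = z^2 d_j^2 .
\]
(If $\mathbf{Y}$ has several columns, so that $\beta_j$ is a row or vector, the same computation yields $f(z) = z^2\|\mathbf{d}_j\|^2$ and the argument is unchanged.)

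Both claims then follow from this quadratic form.
\begin{itemize}
\item \emph{Minimum.} We have $f \ge 0$ everywhere and $f(0) = 0$, so the minimum value $0$ is attained at $z=0$. When $d_j \neq 0$ it is the unique zero.
\item \emph{Monotonicity.} The derivative is $f'(z) = 2 d_j^2 z$. If $d_j \neq 0$ this is strictly positive for every $z>0$, so $f$ is strictly increasing on $(0,\infty)$. In fact $f$ is strictly increasing in $|z|$, so it is also strictly decreasing on $(-\infty,0)$, i.e.\ it increases as $z$ moves further into the negative direction.
\end{itemize}

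I read the sign condition in the statement ("for $z>0$ if $d_j>0$, for $z<0$ if $d_j<0$") as saying that the perturbation grows in magnitude along the direction in which $z d_j > 0$. Under that reading, $f(z) = (z d_j)^2$ is strictly increasing in $|z|$ along that ray, so both cases are covered at once.

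There is no real obstacle in this argument. The only steps needing care are confirming that $\mathbf{H}$ is independent of $z$, which holds because the perturbation enters only the target and not $\mathbf{X}$, and stating the direction of monotonicity precisely for $z<0$.
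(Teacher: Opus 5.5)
Your argument is correct and is essentially the paper's proof: both reduce to $\tilde{\beta}_j(z) = \hat{\beta}_j + z d_j$, hence $f(z) = z^2 d_j^2$, and read off the sign of $f'(z) = 2 d_j^2 z$. You additionally justify the decomposition through the $z$-independent map $\mathbf{H}$, which the paper simply asserts. Your treatment of $z<0$ as growth in $|z|$ matches what the paper's proof actually shows, since $f$ is decreasing in $z$ on $(-\infty,0)$ whatever the sign of $d_j$.
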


\begin{proof}
    See Appendix. 
\end{proof}

Theorem~\ref{theorem:measure} suggests that the output of the explainable module $\Phi_\theta$ can effectively reflect the magnitude of a measurement anomaly: larger measurement errors induce more significant changes in $\Phi_\theta$. This property is crucial for assessing anomaly urgency. If the anomaly-induced changes in $\Phi_\theta$ are not increasing, the anomaly may be benign and not require immediate intervention. For instance, due to the influence of sensor noise or instantaneous load fluctuations, the voltage or current values may vary within a short period of time. However, as the time steps progress, the data will tend to return to normal. In contrast, if the changes grow over time, even a localized measurement anomaly may escalate into a critical issue; for example, when the poisoning attack occurs, the error will accumulate, and the voltage values will keep exceeding the safe operational thresholds. 

In the following theorem, we extend this insight by showing that cyber anomaly also induce locally monotonic changes in the learned causal dynamics.

\begin{theorem}[Monotonicity of Cyber Anomaly]
\label{theorem:cyber}
When the original feature matrix provides a perfect estimate of $\beta_j$, the squared error $\|\tilde{\beta}_j(z) - \beta_j\|^2$ is:
\begin{itemize}
    \item Minimized at $z = 0$ with a value of $0$
    \item Strictly monotonically increasing for $z > 0$
    \item Strictly monotonically decreasing for $z < 0$
\end{itemize}
\end{theorem}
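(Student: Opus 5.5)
The plan is to reduce the statement to the same ridge-regression setting used for Theorem~\ref{theorem:measure}. The cyber-anomaly perturbation will be modelled so that the perturbed estimator $\tilde\beta_j(z)$ is an \emph{affine} function of $z$. The squared error is then a pure quadratic $c_j^2 z^2$, and this is exactly what gives monotonicity on the whole half-lines $z>0$ and $z<0$, not merely near $z=0$. That global claim is what the statement requires, so I will not argue through a local derivative computation at all.

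\emph{Step 1: modelling the cyber anomaly.} A cyber anomaly $V_k'(t)=V_k(t)+z(t)$ changes the true state rather than only a reading. Under the linearised dynamics $dS/dt=\Phi S+b$ assumed in this section, the anomalous variable's increment is transported through the same linear map that produces the targets. The effect on the regression targets is therefore a shift lying in the range of the design, $\tilde{\mathbf Y}=\mathbf Y+z\,\mathbf X\mathbf a$. Here $\mathbf a$ is a fixed direction determined by the root-cause variable $k$ and the learned coupling; for instance $\mathbf a=\mathbf e_k$ scaled by the corresponding causal weight. This contrasts with the measurement case of Theorem~\ref{theorem:measure}, where $\mathbf v$ was arbitrary. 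I will state this propagation step explicitly as the modelling assumption that distinguishes a cyber anomaly: it enters through the dynamics, i.e.\ through $\mathbf X$.

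\emph{Step 2: closed form.} Substituting $\tilde{\mathbf Y}$ into Eq.~(\ref{eq:h_beta_ridge}) gives, by linearity,
\begin{equation*}
\tilde{\bm\beta}(z)=\hat{\bm\beta}_{ridge}+z\,\mathbf H\mathbf a,\qquad \mathbf H=(\mathbf X^T\mathbf X+\lambda\mathbf I)^{-1}\mathbf X^T\mathbf X .
\end{equation*}
Write $c_j=(\mathbf H\mathbf a)_j$. Using the hypothesis $\hat\beta_j=\beta_j$, I get $f(z)=\|\tilde\beta_j(z)-\beta_j\|^2=c_j^2z^2$. Then $f(0)=0$, and $f\ge 0$ everywhere, so $z=0$ is a minimiser. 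Since $f'(z)=2c_j^2z$, this derivative is strictly positive on $(0,\infty)$ and strictly negative on $(-\infty,0)$ whenever $c_j\neq 0$. By the mean value theorem this gives strict increase on all of $z>0$ and strict decrease on all of $z<0$. No sign condition on $c_j$ is needed, unlike $d_j$ in Theorem~\ref{theorem:measure}, because the error depends on $z$ only through $z^2$.

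\emph{Step 3: nondegeneracy, the main obstacle.} Strictness needs $c_j\neq 0$; if $c_j=0$ then $f\equiv 0$. I would handle this by showing that $\mathbf H$ is invertible. With $\mathbf X^T\mathbf X=\mathbf U\mathrm{diag}(\sigma_i^2)\mathbf U^T$, we have $\mathbf H=\mathbf U\,\mathrm{diag}\bigl(\sigma_i^2/(\sigma_i^2+\lambda)\bigr)\mathbf U^T$, which is invertible when $\mathbf X$ has full column rank. Hence $\mathbf H\mathbf a\neq 0$ for $\mathbf a\neq 0$. The remaining, genuinely assumption-dependent point is that the $j$-th component is nonzero, i.e.\ coefficient $j$ is affected by the anomaly. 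I would make this explicit: either take $j$ to be a coefficient in the anomaly's propagation path, or note that it holds for generic $\mathbf a$. If the reviewer insists on a perturbation of $\mathbf X$ itself rather than of the targets, the estimator becomes rational in $z$ and global monotonicity can fail. That is why I commit to the propagation-through-dynamics model of Step~1.
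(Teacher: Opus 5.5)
Your argument is correct for the model you adopt, but it takes a genuinely different route, and the difference is in the model rather than the algebra.

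The paper keeps $\mathbf Y$ fixed and perturbs the design, $\tilde{\mathbf X}=\mathbf X+z\mathbf e_j$. It differentiates $\tilde{\bm\beta}(z)=\mathbf A(z)^{-1}\mathbf b(z)$ and gets $f'(0)=0$ and $f''(0)=2\bigl(\tilde\beta_j'(0)\bigr)^2>0$ outside degenerate cases. It then passes from this strict local minimum to monotonicity on the whole half-lines using only $f\ge 0$, which does not follow. Your remark that global monotonicity can fail in that setting is right. In the scalar case $\tilde X=1+z$ one has $\tilde\beta(z)=(1+z)y/\bigl((1+z)^2+\lambda\bigr)$. This returns to $\hat\beta=y/(1+\lambda)$ at $z=\lambda-1$, so for $y\neq 0$ and $\lambda\neq 1$ the error vanishes again on one of the half-lines.

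So the paper's route addresses the feature-perturbation case it intends, but it honestly yields only local monotonicity near $z=0$. Yours yields the exact global law $f(z)=c_j^2z^2$ and makes explicit the nondegeneracy condition $c_j\neq 0$ that both arguments need. The price is that, with $\tilde{\mathbf Y}=\mathbf Y+z\mathbf X\mathbf a$, your result is literally Theorem~1 with $\mathbf v=\mathbf X\mathbf a$ and $\mathbf d=\mathbf H\mathbf a$. Mathematically, the cyber case becomes a special case of the measurement case.

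Also, your Step~1 is not a consequence of $dS/dt=\Phi S+b$. Altering the true state $V_k$ also alters a regressor, which puts you back in the paper's rational-in-$z$ setting. Holding $\mathbf X$ fixed while shifting $\mathbf Y$ by $z\mathbf X\mathbf a$ really encodes a change of the coefficients, $\bm\beta\mapsto\bm\beta+z\mathbf a$, i.e.\ an altered ODE with unaltered states. That is a defensible reading of the paper's description of cyber anomalies as altering the ODE itself. You should state your model that way rather than present it as derived from state propagation.
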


\begin{proof}
    See Appendix. 
\end{proof}

Theorem~\ref{theorem:cyber} demonstrates that cyber anomaly also induce locally monotonic changes at root cause variables in the learned causal structure captured by $\Phi_\theta$. Unlike measurement anomaly, which only affect observed values, cyber anomaly directly perturb the underlying system dynamics by altering the ODE itself. As a result, the change in $\Phi_\theta$ is more widespread and persistent. This monotonic progression provides a signal for assessing the evolving severity of the anomaly. A growing deviation in $\Phi_\theta$ over time suggests that functions injected during the cyber attack are changing and propagating through the system, potentially compromising multiple components. Identifying such trends is essential for timely mitigation, especially in cases like FDI attacks, where early intervention can prevent cascading failures.

\section{Experiments}
\label{experiments}
The distinct patterns of change in the causality relationships $|C - C'|$ for measurement and cyber anomaly provide a robust foundation for anomaly type classification. Measurement anomaly produce a concentrated change in the causality matrix, primarily affecting a single variable's relationships, aligning with scenarios like sensor errors or calibration issues. Conversely, cyber anomaly result in a more diffuse pattern of changes, reflecting their ability to propagate through interconnected systems. These theoretical insights not only support our anomaly classification approach but also provide a deeper understanding of how different types of anomaly manifest in dynamical systems, enabling more targeted and appropriate responses to system irregularities.
The purpose of our experiments is listed as follows: 
\begin{itemize}
    \item We evaluate PICODE's performance in identifying anomaly within power systems, comparing it to state-of-the-art methods. 
    \item We examine the model's capability to accurately pinpoint the location of the root cause, comparing it to state-of-the-art methods.
    \item  We demonstrate PICODE's unique ability to characterize the type and the shape of the anomaly, which is not provided in other works.
\end{itemize}

\begin{figure}
\centering
\includegraphics[width=0.99\columnwidth]{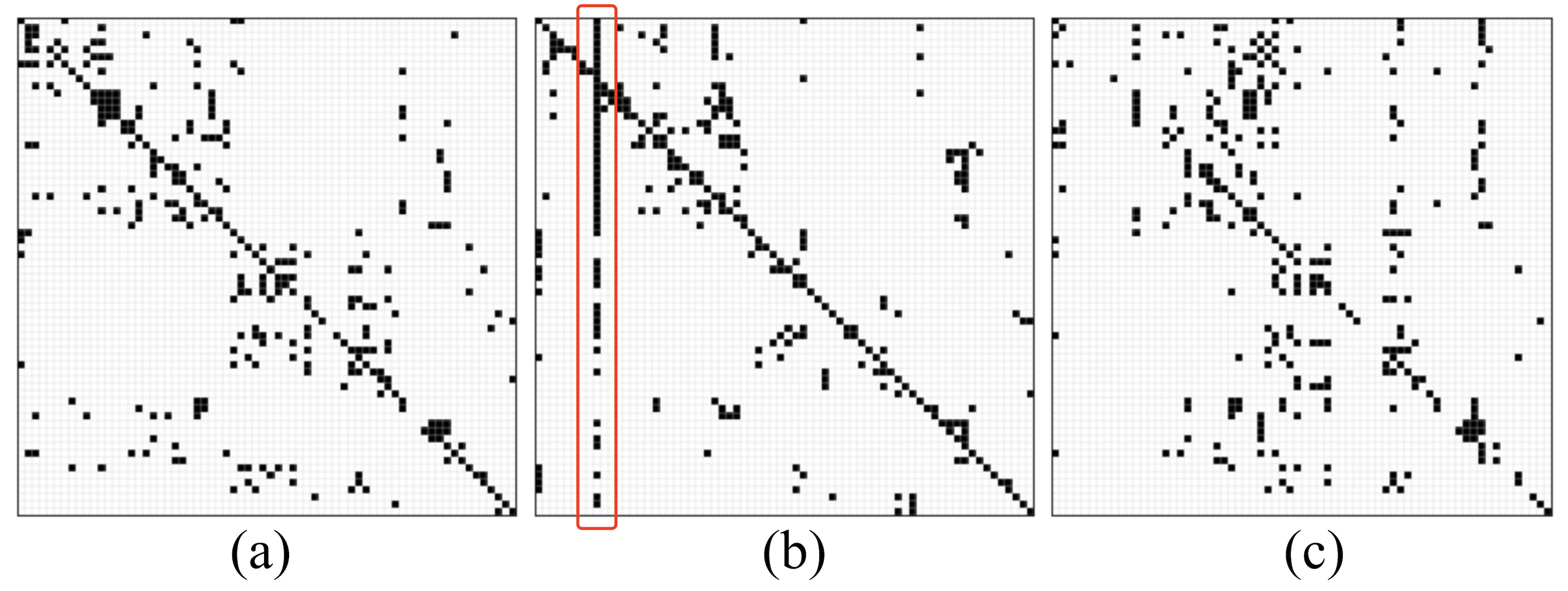} 
\caption{Binarized adjacency matrices of the learned causal graph for IEEE 68-bus system. (a) Normal periods. (b) Measurement anomaly with root cause at sensor 9. The impact is concentrated in a single column. (c) Cyber anomaly with root cause at sensor 9. Changes are distributed across all sensors.}
\label{fig:problem-def}
\end{figure}

\subsection{Experiment Settings}
We evaluate our approach using a simulated power system testbed developed by Pacific Northwest National Laboratory~\cite{nandanoori2020model}, which emulates cyber-physical attacks and measurement errors. The system comprises 68 components, each equipped with sensors that monitor various electrical variables, including voltage magnitude, voltage angle, frequency, current magnitude, and current angle. For our experiments, we focus on voltage magnitude and angle, two critical variables for power system stability, to detect anomaly and perform root cause analysis (RCA).
In our experiments, we enable the power system stabilizer and automatic generation control.

The spatial layout of the 68 components in the IEEE 68 system~\cite {zhao2014design}. is depicted in Figure~\ref{fig:ieee-diagram}, which is governed by a physical connectivity graph, illustrated with its corresponding adjacency matrix shown in Figure~\ref{fig:ieee-graph-adj}. We note that the causal graph in the IEEE 68 system could be different from the physical connection graph since each component is different. To reflect realistic operating conditions, we enable key control mechanisms, including load variations, power system stabilizers, and automatic generation control.

We simulate a time series with 3000 data points to learn the pattern during the regular time period. 
For anomaly detection, RCA tasks, type classification, and shape characterization, we simulate 68 time series for measurement anomaly traces, and 136 time series traces with cyber anomaly. Each dataset consists of 30,000 samples, each with 3,000 injected anomalies. These anomalies affect the root cause sensor and consequently influence the overall system. Measurement anomaly are simulated using Eq. (\ref{eq:physical}). Cyber anomaly, on the other hand, are generated using the GridSTAGE simulator and include two representative false data injection (FDI) patterns: step attacks and poisoning attacks, which will be discussed in the following section.

\subsubsection{Anomaly Simulation}
To simulate measurement anomaly, we perturb sensor readings using the following model:
\begin{equation}
\label{eq:anomaly-value}
    A(x_i) = x_i + z(t),
\end{equation}
where $z(t)$ is a monotonic or a unimodal function of time $t$, representing two measurement errors. This perturbation is applied post-simulation and affects only the observed data, without modifying system's underlying physical dynamics.

In contrast, cyber anomaly are injected directly during the time-series simulation using the GridSTAGE simulator. These include two widely studied types of false data injection (FDI) attacks: step attacks and poisoning attacks. Unlike measurement anomaly, cyber anomaly alter the system's trajectory by modifying internal variables during simulation, thereby disrupting the governing ODEs. This distinction reflects their respective impacts—measurement anomaly affect only observations, whereas cyber anomaly influence both the system's evolution and downstream measurements. We use a causal graph of the normal period as the base result to detect anomaly, locate the root cause, and characterize the shape.

\subsubsection{Baselines}
For anomaly detection, we compare our model with Deep SVDD~\cite{zhou2021vae} and Anomaly Transformer~\cite{xu2021anomaly} to demonstrate comparable performance on the simulated datasets. 
For root cause analysis, we benchmark against RootClam~\cite{han2023root} and CausalRCA~\cite{budhathoki2022causal}, which are state-of-the-art models in RCA.


\subsection{Results and Analysis}


We first evaluate the result using PICODE to learn the causal graph in the IEEE 68 system. We need to note that even though the result should be similar to the connection graph in the IEEE 68 system, since each component has different functions and is distinct from resistance and capacitance, the influence of each component is also different, which implies some connections between components might not exist because they are not significant enough. The adjacency matrix of the causal graph is shown in Figure~\ref{fig:problem-def}.

The adjacency matrix shows that the causal graph is a subgraph of the physical connection graph, with each component strongly related to itself, and some connections on the right are also meaningful. This means that the causal graph learned by PICODE is consistent with the physical nature of the IEEE 68 system, and PICODE successfully learns the spatial-temporal dependency. 

\subsubsection{Anomaly Detection}

We evaluate the anomaly detection performance of our model without distinguishing between different anomaly types. We train PICODE and baseline models using data from normal operating periods, then compute an anomaly score for each test sample using Eq.~\eqref{eq:anomaly-score}. We report detection precision, recall, and F1-score across three anomaly scenarios, with results summarized in Table~\ref{table:main-result}.

\small
\begin{table}[h!]
\centering
\caption{Type Classification and Shape Characterization}
\label{table:rca-class}
\begin{tabular}{lll}
    \toprule
      & Measurement Anomaly & Cyber Anomaly \\
    \midrule
    Classification & 0.824 & 1.000 \\
    Shape Characterization & 0.950 & 0.775 \\
    \bottomrule
\end{tabular}
\end{table}
\normalsize

As shown in Table~\ref{table:main-result}, our proposed model, \textbf{PICODE}, consistently outperforms baseline methods, including Deep SVDD and Anomaly Transformer, across all evaluation metrics. We attribute this performance gain to PICODE's ODE-based architecture, which effectively leverages domain-informed structural priors to learn the underlying system dynamics. In contrast, purely data-driven models such as Deep SVDD and Anomaly Transformer are less effective in this setting, 
This may be due to DeepSVDD's limited ability to utilize time series features and its relatively simple architecture, which struggles to capture the complex temporal correlations in dynamic power data.
The detection performance for the Poisoning Attack scenario is slightly lower than for other types of anomaly. This reduced performance is primarily attributed to the subtle nature of changes during the early stages of a poisoning attack. Since the ground-truth label marks the attack's onset while the observable deviation remains minimal initially, this temporal mismatch results in reduced accuracy across all models.

Overall, these results demonstrate that PICODE is capable of accurately modeling system behavior during normal operating periods and detecting deviations indicative of anomaly. This constitutes a critical first step toward explainable and reliable anomaly detection in cyber-physical systems.

\subsubsection{Root Cause Localization}

We evaluate PICODE's performance on root cause localization using synthetic datasets with injected anomaly. Building upon the causal model trained on normal operation data (as described in the previous section), we retrain PICODE using samples from each anomalous period to capture the updated causal structure. The root cause is then inferred by applying Eq.~(\ref{eq:get-root-cause}), which ranks variables based on their computed root cause scores.
Localization accuracy is evaluated using the Top-\(k\) metric, which checks whether the ground truth root cause appears in the top-\(k\) ranked variables. For fair comparison, both baseline methods (i.e., CausalRCA and RootClam) are augmented with the same domain-informed features used by PICODE. We do not report results from PICODE without these features, as its performance was significantly degraded without them.

As shown in Table~\ref{table:rca-result}, PICODE substantially outperforms both CausalRCA and RootClam across all three simulated datasets, achieving higher Top-1, Top-3, and Top-5 localization accuracy. These results support our hypothesis that explicitly modeling system dynamics through ordinary differential equations (ODEs) enhances both anomaly detection and root cause localization.
We believe PICODE’s better performance stems from its alignment with the physical principles governing dynamical systems. By directly learning the system's ODE parameters, PICODE can identify the variables most responsible for deviations, offering a more principled localization mechanism. 
CausalRCA struggles to localize the root cause in power systems because it fails to adapt to normal patterns during anomaly periods. In addition, RootCLAM employs a VAE-based model for root cause localization, which encodes the input data into a normal distribution, which may not hold in power systems, due to complex and domain-specific distributions. Moreover, the VAE framework typically requires a larger amount of training data to capture such complexity, especially in systems with additional architectures like load balancing.
In the next section, we focus on anomaly type classification, grounded in our theoretical framework.

\subsubsection{Anomaly Type Classification and Shape Characterization}

We evaluate the classification performance of PICODE in distinguishing between \textbf{Measurement Anomalies} and \textbf{Cyber Anomalies} (e.g., poisoning and step attacks), using the simulated dataset introduced earlier.

An anomaly is categorized as a measurement anomaly if the change metric $M(C, C') \geq 0.6$, otherwise, it is classified as a cyber anomaly, as shown in Figure~\ref{fig:problem-def}. The corresponding classification accuracies are shown in the first row of Table~\ref{table:rca-class}.

The results show that PICODE achieves high accuracy in identifying cyber anomaly, while measurement anomaly are sometimes misclassified as cyber anomaly, but cyber anomaly could hardly be classified as measurement anomaly. This is mainly due to the strict condition for identifying measurement anomaly, which requires observing a clear column-wise change in the dependency matrix. However, the classification function demonstrates strong performance in general.

To further characterize the shape of injected anomaly, we assume the root cause is known and introduce 20 unimodal and 20 monotonic functions for measurement anomaly, along with 20 trapezoid-shaped (unimodal) and 20 ramp (monotonic) attacks for cyber anomaly. Each anomaly spans 3000 time points and is divided into three windows to train PICODE independently. We use the metric $\Delta(t_1, t_2)$ to capture the anomaly’s shape signature. As shown in the second row of Table~\ref{table:rca-class}, PICODE accurately identifies the underlying shape, especially for measurement anomaly, demonstrating its ability to discern whether the system is recovering or deteriorating.

These findings support Theorems~\ref{theorem:measure} and \ref{theorem:cyber}, and collectively reinforce our hypothesis that changes in causal dependencies are informative not only for anomaly detection and root cause localization but also for anomaly classification and shape characterization. The interpretability offered by PICODE enhances situational awareness for domain experts and is particularly valuable for diagnostics in power systems.

\section{Conclusion}

In this work, we present PICODE, a unified and interpretable framework for anomaly detection, classification, root cause localization, and shape characterization in power systems. By modeling system dynamics with Neural ODEs and leveraging causality change insights, PICODE effectively captures both cyber and measurement anomaly behaviors. Experimental results validate its superiority over existing methods, and its integrated, explainable design makes it well-suited for management in cyber-physical systems.

\bibliographystyle{IEEEtran}
\bibliography{ICMLA2025}

\clearpage

\section{Appendix}
\setcounter{theorem}{0}
\begin{theorem}[Monotonicity in Measurement Anomaly]
\label{theorem:measure}
Let $f(z) = \|\tilde{\beta}_j(z) - \beta_j\|^2$ represent the squared error in the $j$-th coefficient when the target vector is perturbed as $\tilde{\mathbf{Y}} = \mathbf{Y} + z\mathbf{v}$. If the original estimator produces a perfect estimate for $\beta_j$ (i.e., $\hat{\beta}_j = \beta_j$), then:
\begin{itemize}
    \item If $d_j \neq 0$, the function $f(z)$ is strictly monotonically increasing for $z > 0$ if $d_j > 0$, or for $z < 0$ if $d_j < 0$
    \item The function achieves its minimum value of 0 at $z = 0$
\end{itemize}
where $d_j$ is the $j$-th component of $\mathbf{d} = (\mathbf{X}^T\mathbf{X} + \lambda \mathbf{I})^{-1}\mathbf{X}^T\mathbf{v}$.
\end{theorem}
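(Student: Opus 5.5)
The plan is to use linearity of the ridge estimator in the target. Starting from the closed form in Eq.~(\ref{eq:h_beta_ridge}), the perturbed estimator with target $\tilde{\mathbf{Y}} = \mathbf{Y} + z\mathbf{v}$ is
\begin{equation*}
\tilde{\bm{\beta}}(z) = (\mathbf{X}^T\mathbf{X} + \lambda \mathbf{I})^{-1}\mathbf{X}^T(\mathbf{Y} + z\mathbf{v}) = \hat{\bm{\beta}}_{ridge} + z\,\mathbf{d}.
\end{equation*}
This expansion is legitimate because $\mathbf{X}^T\mathbf{X} + \lambda\mathbf{I}$ is positive definite, and hence invertible, for $\lambda > 0$. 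Reading off the $j$-th component gives $\tilde{\beta}_j(z) = \hat{\beta}_j + z d_j$. If $\mathbf{Y}$ has several columns (the multi-task case), $d_j$ and $\beta_j$ are rows, and the same identity holds row-wise.

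Next I apply the hypothesis $\hat{\beta}_j = \beta_j$. The error then collapses to
\begin{equation*}
f(z) = \|z\,d_j\|^2 = z^2\|d_j\|^2,
\end{equation*}
which in the scalar case is $z^2 d_j^2$. From this form the second bullet follows at once: $f \ge 0$ everywhere and $f(0) = 0$, so the minimum value $0$ is attained at $z = 0$.

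For the first bullet, I compute $f'(z) = 2 z d_j^2$. This is strictly positive for $z > 0$ whenever $d_j \neq 0$, so $f$ is strictly increasing on $(0,\infty)$. Alternatively, for $0 < z_1 < z_2$ one has $z_1^2 d_j^2 < z_2^2 d_j^2$ directly.

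The case $d_j < 0$ with $z < 0$ needs care. As literally stated, $f$ decreases in $z$ on $z<0$. The intended meaning must be that $f$ increases in the perturbation magnitude $|z|$, equivalently that $f$ increases as $z$ moves away from $0$ in the direction of $\operatorname{sign}(d_j)$. Under this reading, both sign cases reduce to the single observation that $f = (z d_j)^2$ is strictly increasing in $|z d_j|$.

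Because $f$ depends only on $z^2$ and not on the sign of $d_j$, I will note that the sign split in the statement is cosmetic. I expect the main obstacle to be this interpretational issue rather than any computation. I will state the monotonicity as strictly increasing in $|z|$ on each half-line, which covers both bullets consistently.
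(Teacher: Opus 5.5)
Your proposal is correct and follows essentially the same route as the paper: expand $\tilde{\beta}_j(z)=\hat{\beta}_j+z d_j$, use $\hat{\beta}_j=\beta_j$ to get $f(z)=z^2 d_j^2$, and read the minimum and the monotonicity off $f'(z)=2 z d_j^2$ (you also justify the linear expansion from the closed-form ridge solution, which the paper simply asserts). Your reading of the $z<0$ clause as monotonicity in $|z|$ matches the paper's own proof, which likewise finds $f$ strictly decreasing in $z$ on $z<0$ and concludes that $f$ grows with $|z|$ in either direction.
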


\begin{proof}
    Starting with the expression for the perturbed coefficient:
\begin{equation}
\tilde{\beta}_j(z) = \hat{\beta}_j + zd_j
\end{equation}

Given that $\hat{\beta}_j = \beta_j$ (perfect original estimate), the squared error is:
\begin{align}
f(z) = \|\tilde{\beta}_j(z) - \beta_j\|^2 &= \|(\hat{\beta}_j + zd_j) - \beta_j\|^2 \\
&= \|zd_j\|^2 \\
&= z^2d_j^2
\end{align}

This is clearly a quadratic function of $z$. The derivative with respect to $z$ is:
\begin{equation}
\frac{df(z)}{dz} = 2zd_j^2
\end{equation}

From this derivative, we can observe that:
\begin{itemize}
    \item At $z = 0$, $\frac{df(z)}{dz} = 0$, indicating a stationary point
    \item For $z > 0$, $\frac{df(z)}{dz} > 0$ (assuming $d_j \neq 0$), indicating that $f(z)$ is strictly increasing
    \item For $z < 0$, $\frac{df(z)}{dz} < 0$ (assuming $d_j \neq 0$), indicating that $f(z)$ is strictly decreasing
\end{itemize}

The second derivative is:
\begin{equation}
\frac{d^2f(z)}{dz^2} = 2d_j^2 > 0 \text{ (for } d_j \neq 0 \text{)}
\end{equation}

This confirms that $f(z)$ is strictly convex and achieves its global minimum of 0 at $z = 0$. The function increases quadratically as $|z|$ increases in either direction.

Unlike the feature perturbation case, here the relationship between $z$ and the squared error is exactly quadratic, making the analysis more straightforward.
\end{proof}

\begin{theorem}[Monotonicity of Cyber Anomaly]
\label{theorem:cyber}
When the original feature matrix provides a perfect estimate of $\beta_j$, the squared error $\|\tilde{\beta}_j(z) - \beta_j\|^2$ is:
\begin{itemize}
    \item Minimized at $z = 0$ with a value of $0$
    \item Strictly monotonically increasing for $z > 0$
    \item Strictly monotonically decreasing for $z < 0$
\end{itemize}
\end{theorem}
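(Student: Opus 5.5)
Theorem~\ref{theorem:cyber} is the counterpart of Theorem~\ref{theorem:measure}, except that the perturbation now enters the \emph{feature} matrix rather than the target. The proof from the measurement case carries over once the perturbed coefficient is written in a usable form. A cyber anomaly changes the true state $V_i$, and that state appears as a regressor for downstream variables. I will therefore model the anomaly as $\tilde{\mathbf{X}}(z) = \mathbf{X} + z\mathbf{u}\mathbf{e}_k^T$, a rank-one perturbation of the column for the root-cause variable. The target is kept consistent with the unperturbed dynamics. The perturbed ridge estimator is
\begin{equation*}
\tilde{\bm{\beta}}(z) = (\tilde{\mathbf{X}}^T\tilde{\mathbf{X}} + \lambda \mathbf{I})^{-1}\tilde{\mathbf{X}}^T\mathbf{Y},
\end{equation*}
obtained from Eq.~(\ref{eq:h_beta_ridge}). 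The hypothesis ``perfect estimate'' means $\tilde{\beta}_j(0) = \hat{\beta}_j = \beta_j$. Consequently $g(z) := \|\tilde{\beta}_j(z) - \beta_j\|^2$ satisfies $g(0)=0$ and $g \ge 0$, which gives the first bullet immediately.

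For the monotonicity bullets, I will reduce $g$ to the square of a scalar function of $z$. Write $h(z) = \tilde{\beta}_j(z) - \beta_j$, so that $g = h^2$ and $g'(z) = 2h(z)h'(z)$. The expansion of $(\tilde{\mathbf{X}}^T\tilde{\mathbf{X}}+\lambda \mathbf{I})^{-1}$ can be computed by Sherman--Morrison, applied twice because $\tilde{\mathbf{X}}^T\tilde{\mathbf{X}}$ receives a rank-two symmetric update $z(\mathbf{X}^T\mathbf{u}\mathbf{e}_k^T + \mathbf{e}_k\mathbf{u}^T\mathbf{X}) + z^2\|\mathbf{u}\|^2\mathbf{e}_k\mathbf{e}_k^T$. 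The result is that $h(z)$ is a rational function with $h(0)=0$. Its first-order behaviour is $h(z) = z\,c_j + O(z^2)$, where
\begin{equation*}
c_j = \mathbf{e}_j^T(\mathbf{X}^T\mathbf{X}+\lambda \mathbf{I})^{-1}\bigl(\mathbf{e}_k\mathbf{u}^T\mathbf{Y} - (\mathbf{X}^T\mathbf{u}\mathbf{e}_k^T + \mathbf{e}_k\mathbf{u}^T\mathbf{X})\hat{\bm{\beta}}\bigr)
\end{equation*}
plays the role of $d_j$ in Theorem~\ref{theorem:measure}. Near $z=0$, assuming $c_j \neq 0$, we get $g(z) = c_j^2 z^2 + O(z^3)$. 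Hence $g'(z)$ has the sign of $z$, and $g$ is strictly increasing for small $z>0$ and strictly decreasing for small $z<0$.

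The main obstacle is extending this from a neighbourhood of $0$ to all $z>0$ and all $z<0$. Unlike the measurement case, $h$ is not linear in $z$, so $h$ could in principle return to zero or $h'$ could change sign. My first attempt will be to show that $h(z)$ has the form $z\,p(z)/q(z)$ with $q>0$, where $q$ is the positive determinant coming from Sherman--Morrison because $\lambda>0$. The numerator $p$ should then be of low degree with a definite sign, so that $h$ keeps the sign of $z c_j$ and $h h'$ keeps the sign of $z$. If this global control fails, I will prove the statement in the local form it actually supports: monotonic on $(0,\epsilon)$ and $(-\epsilon,0)$ under $c_j\neq 0$. I will also explicitly restrict the perturbation model so that the dependence is affine in $z$ (for example, a linearized first-order propagation). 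In that case $g$ is exactly quadratic and the argument of Theorem~\ref{theorem:measure} applies verbatim. This matches the paper's use of the result as \emph{locally} monotonic changes.
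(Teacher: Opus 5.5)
Your local argument is essentially the paper's. The paper differentiates $(\tilde{\mathbf{X}}^T\tilde{\mathbf{X}}+\lambda\mathbf{I})^{-1}$ directly and uses $f'(0)=0$ and $f''(0)=2\bigl(\tilde\beta_j'(0)\bigr)^2>0$. Your Sherman--Morrison expansion gives the same first-order coefficient $c_j=\tilde\beta_j'(0)$. You also sensibly avoid the paper's substitution $\mathbf{Y}=\mathbf{X}\bm{\beta}$, which together with $\hat{\bm{\beta}}=\bm{\beta}$ and $\lambda>0$ forces $\bm{\beta}=0$.

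The gap is the global step you flag, and your first attempt at closing it cannot succeed: the second and third bullets are false in exactly your model. Only row and column $k$ of $\tilde{\mathbf{X}}^T\tilde{\mathbf{X}}+\lambda\mathbf{I}$, and the $k$-th entry of $\tilde{\mathbf{X}}^T\mathbf{Y}$, depend on $z$. So by Cramer's rule $\tilde\beta_j(z)=N(z)/D(z)$, where $\deg N\le 2$ and $D(z)=\det(\tilde{\mathbf{X}}^T\tilde{\mathbf{X}}+\lambda\mathbf{I})>0$ is a quadratic with positive leading coefficient (for $\lambda>0$, $\mathbf{u}\neq 0$). Hence $h(z)=z(a+bz)/D(z)$ with $a=c_jD(0)$, so your $p$ is affine and not sign-definite. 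If $b\neq 0$, $h$ vanishes again at $z=-a/b\neq 0$. If $b=0$, $h(z)\to 0$ as $|z|\to\infty$. Either way, whenever $c_j\neq 0$, $g$ is non-monotone on at least one half-line.

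Concretely, take one feature $x=1$, $u=1$, target $y=1$ and $\lambda=1/2$. Then $\tilde\beta(z)=(1+z)/\bigl((1+z)^2+1/2\bigr)$, which equals $2/3$ at $z=0$ and at $z=-1/2$ and strictly exceeds $2/3$ in between. So $g(-1/2)=0<g(z)$ for $z\in(-1/2,0)$. With $\lambda=2$ the second zero moves to $z=1$, breaking monotonicity on the positive side instead.

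The paper reaches the global bullets only through the inference ``strict local minimum at $0$ and $f\ge 0$, hence $f$ increases with $|z|$ on each side.'' This example refutes that inference, so you were right not to trust it. The provable content is your fallback, and you should state it as the theorem rather than as a contingency. One option is strict monotonicity of $g$ on $(0,\epsilon)$ and $(-\epsilon,0)$ when $c_j\neq 0$; the explicit form of $h$ even lets you compute $\epsilon$. The other is the global statement under a perturbation that is affine in $z$, where $g=c_j^2z^2$ exactly and the result is just Theorem~1 again.
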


\begin{proof}
Let $f(z) = \|\tilde{\beta}_j(z) - \beta_j\|^2$. Since we have a perfect estimate at $z=0$, we know $f(0) = 0$.

To establish monotonicity, we need to analyze the derivative $\frac{df(z)}{dz}$.

We start with the ridge estimator for the perturbed matrix:

\begin{equation}
\tilde{\boldsymbol{\beta}}(z) = (\tilde{\mathbf{X}}^T\tilde{\mathbf{X}} + \lambda \mathbf{I})^{-1}\tilde{\mathbf{X}}^T\mathbf{Y}
\end{equation}

Recall that $\tilde{\mathbf{X}} = \mathbf{X} + z\mathbf{e}_j$, where $\mathbf{e}_j$ is the unit vector in the $j$-th direction.

Let's denote $\mathbf{A}(z) = \tilde{\mathbf{X}}^T\tilde{\mathbf{X}} + \lambda \mathbf{I}$ and $\mathbf{b}(z) = \tilde{\mathbf{X}}^T\mathbf{Y}$, so $\tilde{\boldsymbol{\beta}}(z) = \mathbf{A}(z)^{-1}\mathbf{b}(z)$.

Taking the derivative with respect to $z$:
\begin{align}
\frac{d\tilde{\boldsymbol{\beta}}(z)}{dz} &= \frac{d}{dz}[\mathbf{A}(z)^{-1}\mathbf{b}(z)] \\
&= \frac{d\mathbf{A}(z)^{-1}}{dz}\mathbf{b}(z) + \mathbf{A}(z)^{-1}\frac{d\mathbf{b}(z)}{dz}
\end{align}

For the first term, we use the derivative of the inverse: $\frac{d\mathbf{A}^{-1}}{dz} = -\mathbf{A}^{-1}\frac{d\mathbf{A}}{dz}\mathbf{A}^{-1}$. This gives:
\begin{align}
\frac{d\tilde{\boldsymbol{\beta}}(z)}{dz} &= -\mathbf{A}(z)^{-1}\frac{d\mathbf{A}(z)}{dz}\mathbf{A}(z)^{-1}\mathbf{b}(z) + \mathbf{A}(z)^{-1}\frac{d\mathbf{b}(z)}{dz}
\end{align}

Now we compute the derivatives of $\mathbf{A}(z)$ and $\mathbf{b}(z)$:

\begin{align}
\frac{d\mathbf{A}(z)}{dz} &= \frac{d}{dz}[\tilde{\mathbf{X}}^T\tilde{\mathbf{X}} + \lambda \mathbf{I}] \\
&= \frac{d}{dz}[(\mathbf{X} + z\mathbf{e}_j)^T(\mathbf{X} + z\mathbf{e}_j) + \lambda \mathbf{I}] \\
&= \mathbf{e}_j^T(\mathbf{X} + z\mathbf{e}_j) + (\mathbf{X} + z\mathbf{e}_j)^T\mathbf{e}_j \\
&= \mathbf{e}_j^T\mathbf{X} + z\mathbf{e}_j^T\mathbf{e}_j + \mathbf{X}^T\mathbf{e}_j + z\mathbf{e}_j^T\mathbf{e}_j \\
&= \mathbf{e}_j^T\mathbf{X} + \mathbf{X}^T\mathbf{e}_j + 2z
\end{align}

And for $\mathbf{b}(z)$:
\begin{align}
\frac{d\mathbf{b}(z)}{dz} &= \frac{d}{dz}[\tilde{\mathbf{X}}^T\mathbf{Y}] \\
&= \frac{d}{dz}[(\mathbf{X} + z\mathbf{e}_j)^T\mathbf{Y}] \\
&= \mathbf{e}_j^T\mathbf{Y}
\end{align}

For the specific case where we're interested in $\tilde{\beta}_j(z)$, let's define $\mathbf{c}_j^T$ as the $j$-th row of $\mathbf{A}(z)^{-1}$. Then:
\begin{align}
\frac{d\tilde{\beta}_j(z)}{dz} &= -\mathbf{c}_j^T\frac{d\mathbf{A}(z)}{dz}\mathbf{A}(z)^{-1}\mathbf{b}(z) + \mathbf{c}_j^T\frac{d\mathbf{b}(z)}{dz} \\
&= -\mathbf{c}_j^T(\mathbf{e}_j^T\mathbf{X} + \mathbf{X}^T\mathbf{e}_j + 2z)\tilde{\boldsymbol{\beta}}(z) + \mathbf{c}_j^T\mathbf{e}_j^T\mathbf{Y}
\end{align}

Now we need to evaluate this at $z=0$. Note that at $z=0$, $\tilde{\mathbf{X}} = \mathbf{X}$ and $\tilde{\boldsymbol{\beta}}(0) = \hat{\boldsymbol{\beta}} = \boldsymbol{\beta}$ (since we assume the original estimate is perfect).

Let $\mathbf{c}_j^T(0)$ be the $j$-th row of $(\mathbf{X}^T\mathbf{X} + \lambda \mathbf{I})^{-1}$ at $z=0$.

Then:
\begin{align}
\left.\frac{d\tilde{\beta}_j(z)}{dz}\right|_{z=0} &= -\mathbf{c}_j^T(0)(\mathbf{e}_j^T\mathbf{X} + \mathbf{X}^T\mathbf{e}_j)\boldsymbol{\beta} + \mathbf{c}_j^T(0)\mathbf{e}_j^T\mathbf{Y} \\
&= -\mathbf{c}_j^T(0)(\mathbf{e}_j^T\mathbf{X} + \mathbf{X}^T\mathbf{e}_j)\boldsymbol{\beta} + \mathbf{c}_j^T(0)\mathbf{e}_j^T\mathbf{X}\boldsymbol{\beta} \\
&= -\mathbf{c}_j^T(0)(\mathbf{e}_j^T\mathbf{X} + \mathbf{X}^T\mathbf{e}_j - \mathbf{e}_j^T\mathbf{X})\boldsymbol{\beta} \\
&= -\mathbf{c}_j^T(0)\mathbf{X}^T\mathbf{e}_j\boldsymbol{\beta}
\end{align}

The sign of this derivative at $z=0$ depends on the specific data configuration. However, if we look at $f(z) = \|\tilde{\beta}_j(z) - \beta_j\|^2$, we can show that:

\begin{equation}
\frac{df(z)}{dz} = 2(\tilde{\beta}_j(z) - \beta_j)\frac{d\tilde{\beta}_j(z)}{dz}
\end{equation}

At $z=0$, we have $\tilde{\beta}_j(0) - \beta_j = 0$, making $\left.\frac{df(z)}{dz}\right|_{z=0} = 0$.

For the second derivative:
\begin{align}
\frac{d^2f(z)}{dz^2} &= 2\left(\frac{d\tilde{\beta}_j(z)}{dz}\right)^2 + 2(\tilde{\beta}_j(z) - \beta_j)\frac{d^2\tilde{\beta}_j(z)}{dz^2}
\end{align}

At $z=0$, this becomes:
\begin{align}
\left.\frac{d^2f(z)}{dz^2}\right|_{z=0} &= 2\left(\left.\frac{d\tilde{\beta}_j(z)}{dz}\right|_{z=0}\right)^2
\end{align}

This is strictly positive unless $\left.\frac{d\tilde{\beta}_j(z)}{dz}\right|_{z=0} = 0$, which would only happen in special degenerate cases.

Therefore, $f(z)$ has a strict local minimum at $z=0$ and is strictly convex in a neighborhood of $z=0$. Given that $f(0) = 0$ and $f(z) \geq 0$ for all $z$ (since it's a squared quantity), we conclude that:
\begin{itemize}
    \item $f(z)$ is strictly increasing for $z > 0$
    \item $f(z)$ is strictly increasing for $z < 0$ (as $|z|$ increases)
\end{itemize}

This establishes the monotonicity property with respect to $|z|$.
\end{proof}

\end{document}